\author{Gregory J.~Puleo \and Olgica Milenkovic}
\thanks{This work was supported in part by NSF Grants CIF 1218764, CIF 1117980, 1339388 and STC Class 2010, CCF 0939370, and the Strategic Research Initiative (SRI) Grant conferred by the University of Illinois, Urbana-Champaign. Research of the first author is supported by the IC Postdoctoral Research Fellowship.}
\title[Constrained Cluster Sizes]{Correlation Clustering with Constrained Cluster Sizes and Extended Weights Bounds}
\renewcommand{\subset}{\subseteq}
\DeclareMathOperator{\cost}{cost}
\DeclareMathOperator{\opt}{opt}
\newcommand{\hang}[1]{\makebox[0cm]{#1}}
\newcommand{\hangs}[1]{\hang{\ #1}}
\newcommand{\st}{\colon\,}
\newcommand{\cee}{\mathcal{C}}
\newcommand{\ex}{\mathbb{E}}
\newcommand{\lfrac}[2]{#1/#2}
\newcommand{\caze}[2]{\textbf{Case {#1}:} \textit{#2}}
\newcommand{\sizeof}[1]{\left\lvert{#1}\right\rvert}
\newtheorem{proposition}{Proposition}
\newtheorem{observation}[proposition]{Observation}
\newtheorem{lemma}[proposition]{Lemma}
\newtheorem{theorem}[proposition]{Theorem}
\theoremstyle{definition}
\newtheorem{assumption}{Assumption}
\theoremstyle{remark}
\newcommand{\muu}{\mu^*}
\newcommand{\wpp}{w^+}
\newcommand{\wmm}{w^-}
\newcommand{\pc}{\textsc{pc}}
\newcommand{\ti}{\textsc{ti}}
\newcommand{\maxag}{\textsc{MaxAgree}}
\newcommand{\mindis}{\textsc{MinDisagree}}
\newcommand{\ccpivot}{\textsc{cc-pivot}}
\begin{document}
\maketitle
\begin{abstract}
  We consider the problem of correlation clustering on graphs with
  constraints on both the cluster sizes and the positive and negative
  weights of edges. Our contributions are twofold: First, we
  introduce the problem of correlation clustering with bounded cluster
  sizes. Second, we extend the regime of weight values for which the
  clustering may be performed with constant approximation guarantees
  in polynomial time and apply the results to the bounded cluster size
  problem.
\end{abstract}
\section{Introduction}
The correlation clustering problem was introduced by Bansal, Blum, and
Chawla~\cite{BBC1,BBC2}.  In the most basic form of the clustering
model, one is given a set of objects and, for some pairs of objects,
one is also given an assessment as to whether the objects are
``similar'' or ``dissimilar''. This information is described using a
graph $G$ with labeled edges: each object is represented by a vertex
of the graph, and the assessments are represented by edges labeled
with either a $+$ (for similar objects) or a $-$ (for dissimilar objects).  
The goal is to partition the objects into \emph{clusters}
so that the edges \emph{within} clusters are mostly positive and the
edges \emph{between} clusters are mostly negative. Unlike many other
clustering models, such as $k$-means~\cite{K-means}, the number of
clusters is not fixed ahead of time. Instead, finding the optimal
number of clusters is part of the problem. For this reason,
correlation clustering has been used in machine learning as a model of
``agnostic learning''~\cite{agnostic}.

In a perfect clustering, all the edges within clusters would be
positive and all the edges between clusters would be
negative. However, the similarity assessments need not be mutually consistent: for
example, if $G$ contains a triangle with two positive edges and one
negative edge, then we must either group the endpoints of the negative
edge together (erroneously putting a negative edge inside a cluster, resulting in a
``negative error'') or else we must group them separately (forcing one
of the positive edges to erroneously go between clusters, resulting in a ``positive
error''). When a perfect clustering is not possible, we seek an
\emph{optimal} clustering: one that minimizes the total number of
``errors''. 

Correlation clustering is closely related to a number of other graph
optimization problems, including the \emph{cluster editing} problem,
introduced by Shamir, Sharan, and Tsur~\cite{shamir}; the \emph{graph partitioning} 
problem~\cite{graph-partitioning}; and the \emph{rank aggregation} problem~\cite{rank-aggregation}.

In the cluster editing problem, one is given a graph $H$ and wishes to
determine the smallest number of edge insertions and deletions that
must be carried out in order to transform $H$ into a disjoint union of
cliques. This problem is equivalent to the special case of
correlation clustering where $G$ is a complete graph: labeling the
edges of $H$ with $+$ and the edges not in $H$ with $-$, we see that
the edges added to $H$ correspond to negative errors in a clustering
of $G$, while the edges removed from $H$ correspond to positive
errors. The variant of cluster editing where we are only allowed to
delete edges, rather than adding them, is called the \emph{cluster deletion} 
problem~\cite{shamir}. Cluster editing has a number
of important applications in bioinformatics, including the analysis of
putative gene co-regulation~\cite{cluster-editing-application}.

In the graph partitioning problem, one is asked to partition the
vertices of a graph $H$ into a fixed number of parts, minimizing the
number of edges between different parts~\cite{graph-partitioning}
subject to an upper bound on the size of the parts. If we again
consider a complete graph $G$ whose positive edges are the
edges of $H$, we can view graph partitioning as a variant of
correlation clustering where the number of clusters is fixed, only
``positive errors'' are penalized, and there is an \emph{upper bound} on the
size of clusters allowed in a solution.

In the rank aggregation problem, and in particular, the \emph{feedback
  arc set} problem on tournaments, one seeks a globally consistent ranking (i.e.,
permutation) based on possibly inconsistent pairwise
comparisons~\cite{feedback-arc-set}. Equivalently, the problem is to
compute the centroid of a set of permutations under a suitable ordinal
distance measure, usually taken to be the Kendall or weighted Kendall
distance~\cite{weighted-kendall}. Ailon, Charikar, and
Newman~\cite{ACN1,ACN2} showed that essentially the same algorithm may
be used to obtain good constant-factor approximations for both the
correlation-clustering problem and the rank-aggregation
problem.

Finding optimal solutions to all the aforementioned problems is
computationally hard.  In particular, optimal correlation clustering
is computationally hard even when $G$ is a complete graph: Bansal,
Blum, and Chawla~\cite{BBC1,BBC2} proved that it is NP-hard to
determine whether a labeled complete graph $G$ has a clustering with
at most $k$ errors, where $k$ is an input parameter. This was also
proved independently by Shamir, Sharan, and Tsur~\cite{shamir} in the
context of cluster editing.  Shamir, Sharan, and Tsur also proved that
for some $\epsilon > 0$, the cluster deletion problem is NP-hard to
approximate within a factor of $(1+\epsilon)$. Rank aggregation under
the Kendall $\tau$-distance was proved to be NP-hard by
Bartholdi~\cite{hard-voting}.  Research has therefore focused on
finding efficient approximation algorithms for this family of
problems.

The objective function for correlation clustering can be viewed in two
different ways, leading to two different formulations of the problem
with different approximability properties. In the \maxag{}
formulation, we seek a clustering that \emph{maximizes} the number of
edges which agree with the clustering. Bansal, Blum, and
Chawla~\cite{BBC1, BBC2} gave a PTAS for \maxag{} in the case where
$G$ is a complete graph. On general graphs, \maxag{} is
APX-hard~\cite{CGW1,CGW2}, although constant-factor approximation
algorithms are known~\cite{CGW1,CGW2,agnostic}.

In our work, we focus on the \mindis{} formulation. Here, we seek a
clustering that \emph{minimizes} the number of edges which disagree
with the clustering. \mindis{} on general graphs is APX-hard, and the
best known approximation ratio is $O(\log n)$, given
by~\cite{CGW1,CGW2,DEFI1}. Any improvement on this approximation ratio
would also improve the approximation ratio for the minimum multicut
problem~\cite{min-multicat}, as shown in
\cite{CGW1,CGW2,DEFI1}. Assuming the Unique Games Conjecture of
Khot~\cite{khotUGC}, there is no constant-factor approximation
algorithm for the minimum multicut problem~\cite{UGC1,UGC2}, so the
same conditional hardness result holds for correlation clustering on
general graphs.

However, constant-factor approximation algorithms for \mindis{} have
been found for the case where $G$ is a complete graph. In their
original paper, Bansal, Blum, and Chawla~\cite{BBC1,BBC2} gave a
$17429$-approximation algorithm.  Charikar, Guruswami, and
Wirth~\cite{CGW1,CGW2} gave a $4$-approximation algorithm for the
problem, based on a two-stage procedure that rounds the solution of a
linear program. Applying a similar algorithm to general graphs yields
a $O(\log n)$-approximation~\cite{CGW1,CGW2}; this was also
independently proved by Demaine, Emanuel, Fiat, and
Immorlica~\cite{DEFI1,DEFI2,DEFI3}. In some sense, this $O(\log n)$ factor gives 
the best possible approximation for techniques based on linear programming, since
the natural linear programming formulation has integrality gap
$\Omega(\log n)$ on general graphs~\cite{DEFI1,DEFI2,DEFI3}.

Later on, Ailon, Charikar, and Newman~\cite{ACN1,ACN2} introduced a
simple randomized $3$-approximation algorithm for \mindis{} on
complete graphs, i.e., an algorithm where the \emph{expected} cost of
the output is at most $3$ times the optimal cost. They also gave a
randomized LP-rounding algorithm achieving an expected approximation
ratio of $2.5$. Van Zuylen and Williamson~\cite{VZW} gave derandomized
versions of these algorithms, achieving the same approximation
ratios. More recently, Chawla, Makarychev, Schramm and
Yaroslavtev~\cite{near-optimal} announced a refinement of the
randomized LP-rounding algorithm as well as a derandomization scheme,
achieving a deterministic approximation ratio of $2.06$ on complete
graphs.  Since the integrality gap of the LP in question is known to
be at least $2$, this approximation ratio is very close to the best
ratio possible using techniques based on LP rounding.

Bansal, Blum, and Chawla~\cite{BBC1,BBC2} also proposed a weighted version
of the correlation-clustering problem, where the edges of the graph
$G$ receive \emph{weights} between $-1$ and $1$ rather than simply
receiving $+$ or $-$ labels: an edge with weight $x$ incurs cost
$\frac{1+x}{2}$ if it is placed between clusters and cost
$\frac{1-x}{2}$ if it is placed within a cluster. Another weighted
formulation was considered by Demaine, Emanuel, Fiat, and
Immorlica~\cite{DEFI1,DEFI2,DEFI3}: here, the edges are still labeled $+$ or $-$,
but each edge also receives a nonnegative weight $w$, with positive
edges costing $w$ when placed between clusters and negative edges
costing $w$ when placed within clusters.

A more general weighted formulation was introduced by Ailon, Charikar,
and Newman, and this is the formulation we subsequently consider.  In
the Ailon--Charikar--Newman model, each edge $e$ is assigned two
nonnegative weights, $\wpp_e$ and $\wmm_e$. A clustering incurs cost
$\wpp_e$ if $e$ is placed between clusters, and incurs cost $\wmm_e$
if $e$ is placed within a cluster. This model includes both the
Bansal--Blum--Chawla formulation and the
Demaine--Emanuel--Fiat--Immorlica formulation as special cases. The
cluster deletion problem can also be represented in this framework, by
giving negative edges a prohibitively high value of $\wmm_e$.

If no restrictions are placed on the weights $\wpp_e$ and $\wmm_e$,
then it is possible to have edges with $\wpp_e = \wmm_e = 0$; these
edges are effectively absent from the graph, so there is no loss of
generality in assuming that $G$ is a complete graph.  Since
correlation clustering is hard to approximate on general graphs, this
suggests that we must place restrictions on $\wpp_e$ and $\wmm_e$ in
order to obtain a class of problems for which a constant-factor
approximation is possible.

The \emph{probability constraints}, which we abbreviate to \pc, give a
natural restriction on the edge weights: in this case, we require
$\wpp_e + \wmm_e = 1$ for every edge $e$. The weight functions
satisfying \pc{} are exactly the weight functions that can be
represented in the Bansal--Blum--Chawla weighting model.  Bansal,
Blum, and Chawla~\cite{BBC1,BBC2} showed that any $k$-approximation
algorithm for the unweighted correlation-clustering problem yields a
$(2k+1)$-approximation algorithm to the weighted problem under \pc.
In conjunction with the $4$-approximation algorithm of Charikar,
Guruswami, and Wirth for the unweighted problem, this yields a
$9$-approximation algorithm for the weighted problem under \pc.
Ailon, Charikar, and Newman obtained a randomized expected
$5$-approximation algorithm for the weighted problem under \pc.

Another widely studied restriction is the \emph{triangle inequality}
restriction (abbreviated \ti), where one requires $\wmm_{uw} \leq
\wmm_{uv} + \wmm_{vw}$ for all $u,v,w \in V(G)$.  Such constraints
arise naturally when correlation clustering is used as a model for
clustering aggregation. Gionis, Mannila, and Tsaparas~\cite{GMT}
obtained a $3$-approximation algorithm for correlation clustering
under \pc{} and \ti, while Ailon, Charikar, and Newman obtained a
randomized expected $2$-approximation algorithm under the same
constraints. Little appears to be known about the approximability of
correlation clustering with \emph{only} \ti, without \pc{} or other
constraints involved.  Indeed, Ailon, Charikar, and Newman leave this
approximability question as an open problem. This question was
partially addressed by van Zuylen and Williamson~\cite{VZW}, who
considered weighted correlation clustering under \ti{}-like hypotheses
stronger than the version of \ti{} stated above.

Previous treatments of the correlation-clustering problem have allowed
\emph{any} clustering to be considered as a feasible
solution. However, in some applications there may be restrictions on
which clusterings can actually be meaningful as solutions. In
particular, the application may demand that no cluster should be very
large: say, each cluster has size at most $K+1$, for some fixed
integer $K$. Such constraints frequently arise in community detection
problems, as described
in~\cite{complexity,bmc,milenkovic2010introduction,communities,yarkony2012fast,zhang2014cell},
where one may have a priori information about the largest possible
size of a subcommunity.  Similar constraints were also considered for
clustering models unrelated to correlation clustering by Khandekar,
Hildrum, Parekh, Rajan, Sethuraman, and Wolf~\cite{boundedstream}. As
discussed earlier, an upper bound on cluster sizes also appears in the
graph partitioning problem~\cite{graph-partitioning}. The
approximation algorithms we present in the next sections are able to
accommodate such constraints in the correlation-clustering case, as
well as accommodate ``soft constraints'' that merely impose a
penalty on the objective function for oversized clusters, without
outright forbidding them.

\section{Our Contributions}
The contributions of our work are two-fold. 
First, we generalize the $4$-approximation algorithm of
Charikar, Guruswami, and Wirth~\cite{CGW1,CGW2} in order to handle
more general weighted graphs than previously considered. When $G$ is a
complete graph, and when the weight functions satisfy the following
constraints:
\begin{itemize}
\item $\wpp_e \leq 1$ for every edge $e$, and
\item $\wmm_e \leq \tau$ for every edge $e$, for some $\tau \in [1, \infty)$, and
\item $\wpp_e + \wmm_e \geq 1$ for every edge $e$,
\end{itemize}
our algorithm achieves an approximation ratio of $5 - 1/\tau$.  Thus,
when the weights satisfy the probability constraints $\wpp_e + \wmm_e
= 1$ for all $e$, the algorithm guarantees a $4$-approximation ratio
and reduces to the Charikar--Guruswami--Wirth approach. (It does not
appear to have been previously recognized that the
Charikar--Guruswami--Wirth algorithm, originally stated for unweighted
instances, works without modification for weights obeying the
probability constraints and gives the same approximation ratio;
cf. Table~1 of \cite{ACN2}.) In the limit $\tau \to \infty$ where
$\wmm_e$ is allowed to be arbitrarily large, the algorithm still
guarantees a constant approximation ratio of $5$.

Note, in particular, that the cluster-deletion problem for a graph $G$
is equivalent to the correlation-clustering problem for the graph $G'$
where all edges $e \in E(G)$ satisfy $\wpp_e =1, \wmm_e = 0$ and all
edges $e \in E(\overline{G})$ satisfy $\wpp_e = 0, \wmm_e = \infty$.
Charikar, Guruswami, and Wirth showed that a slight modification of
their algorithm gives a $4$-approximation for cluster
deletion~\cite{CGW1,CGW2}; in contrast, our algorithm gives a
$5$-approximation but also applies to problems that are intermediate
between correlation-clustering and cluster-deletion, where negative
errors are permissible but may be costly.

Our second contribution is to extend the correlation-clustering model
to allow for bounds on the sizes of the clusters. We assume that there
is a ``soft constraint'' of at most $K+1$ vertices per cluster. To
model this constraint, we give each vertex $v$ a ``penalty'' parameter
$\mu_v$.  If $v$ is placed in a cluster $C$ with more than $K+1$
vertices, then the cost function is charged an additional penalty of
$\mu_v(\sizeof{C} - (K+1))$.  Since this penalty is assessed
separately for each vertex in an oversized cluster, the total penalty
cost for a cluster $C$ with $\sizeof{C} > K+1$ is given by
$(\sizeof{C} - (K+1))\sum_{v \in C}\mu_v$.

If instead a ``hard constraint'' is desired, it suffices to set all 
the penalties to $\mu_v=1$: any clusters in the resulting solution which
are too large can then be split arbitrarily into clusters of size $K+1$ and a ``remainder cluster'', 
yielding no net increase in cost since we assume that
$\wpp_e \leq 1$ for all $e$. Similarly, if no size constraint at all
is desired, we can take all $\mu_v=0$.

In addition to the changes mentioned previously, we show that the
Charikar--Guruswami--Wirth algorithm can be further modified in order
to deal with both the weight and cluster size constraints. Taking $\muu = \max_{u \neq
  v}(\mu_u + \mu_v)$, we obtain an
approximation ratio of $\max\{\muu, 2/\alpha\}$ for the aforementioned clustering scenario, where $\alpha$ is the unique
positive solution to the equation
\[ \frac{2\alpha\muu}{1-2\alpha} + \frac{1}{1-2\alpha+\frac{\alpha}{2\tau}} = \frac{2}{\alpha\hangs.}\]
This rearranges to a cubic equation in $\alpha$ which has an unwieldy analytical solution. However,
for some values of $\tau$ and $\muu$, there is a simple formula for the approximation ratio;
these values are summarized in Table~\ref{tab:appxratio}. In particular, since we only consider
$\alpha \leq 1/2$, whenever $\muu \leq 4$ we have $\max\{\muu, 2/\alpha\} = 2/\alpha$.
\begin{table}
  \centering
  \begin{tabular}{c|c|c|c}
    & $\tau=1$ & $\tau \in [1,\infty)$ & $\tau \to \infty$  \\
    & (generalizes && (arbitrarily large $\wmm_e$)\\
    & prob.~constraints) & & \\\hline
    $\muu=0$ & 4 & $5 - (1/\tau)$ & 5 \\
    (no size bound) &  & & \\\hline
    $\muu \in (0,2)$ & $\star$ & $\star$ & $\frac{8\muu}{-5 + \sqrt{25+16\muu}}$ \\
    (soft size bound) &  &  & \\\hline
    $\muu=2$ & 6 & $\star$ & $\approx 6.275$ \\
    (hard size bound) &&&
  \end{tabular}
  \vspace{0.1in}
  \caption{Approximation ratios for special parameter values. Starred entries are solutions to a cubic equation.}
  \label{tab:appxratio}
\end{table}

We briefly note a curious asymmetry in our result. Our algorithm is
capable of handling size-bounded clustering instances with arbitrarily
large negative weights, but cannot handle instances with very large
positive weights. Since clustering is hard when the weights are
allowed to be arbitrary, one cannot reasonably hope to handle
instances with \emph{both} large negative weights and large positive
weights, but it is reasonable to ask whether there is a
constant-factor approximation algorithm for instances with arbitrarily
large positive weights and bounded negative weights -- the mirror
image, in some sense, of the instances handled by our algorithm. The
hardness of the balanced graph partitioning problem suggests that no
such algorithm is possible: since only positive errors matter in the
graph partitioning problem, we can roughly model the graph
partitioning problem as a weighted correlation-clustering problem by
giving each positive edge a weight of (say) $1000\sizeof{E(G)}$ and
each negative edge weight $1$, so that a single positive error costs
much more than all possible negative errors combined. Andreev and
Racke \cite{graph-partitioning} proved that it is NP-hard to achieve
any finite approximation ratio for the graph partitioning problem when
we require a partition into $k$ clusters, each of size exactly
$n/k$. This \emph{suggests} that it should also be NP-hard to solve
the corresponding correlation-clustering instances with an upper bound
of size $n/k$ for clusters. However, the analogy is not perfect, since
solutions to the correlation-clustering instance may use more than $k$
clusters, and we do not pursue the question any further.
\section{A Region-Growing Algorithm} \label{sec:two-step} Throughout
this and subsequent sections, we denote vertices of the complete graph $G$ 
by lowercase letters. We use the symbol $e$ to denote a
generic edge in $E(G)$, and when needed, specify the endpoint
vertices of the edge as $e=uv$, $u,v \in V(G)$. 
As before, $\wpp_e$
and $\wmm_e$ denote the positive and negative weight of the edge $e$,
while $\tau$ represents an upper bound on the weight of negatively
labeled edges as specified in the introduction. 

Our algorithm builds upon the Charikar--Guruswami--Wirth
algorithm~\cite{CGW1,CGW2} but introduces a number of changes
both in the LP formulation and region-growing procedure.  We require
the following assumptions on the weights and the graph $G$:
\begin{assumption}\label{assumption}  
  We assume that $uv \in E(G)$ for every pair of distinct vertices $u,v \in V(G)$.
  For every edge $e$ we assume that $\wpp_e \leq 1$, that
   $\wmm_e \leq \tau$, where $\tau \in [1, \infty)$, and that $\wpp_e + \wmm_e \geq 1$.
\end{assumption}
\begin{figure}
  \[\begin{aligned}
    & \underset{x,y}{\text{minimize}}
    & & \mathrlap{\left[\sum_{e \in E(G)}(\wpp_e x_e + \wmm_e (1-x_e))\right] + \sum_{v \in V(G)}\mu_vy_v} \\
    & \text{subject to}
    & & x_{uv} \leq x_{uz} + x_{zv} &\quad{\text{(for all distinct $u,v,z \in V(G)$)}} \\
    &&&  \sum_{v \neq u}(1 - x_{uv}) \leq K + y_u &\quad{\text{(for all $u \in V(G)$)}}\\
    &&&  0 \leq x_{e} \leq 1 &\quad{\text{(for all $e \in E(G)$)}}\\
    &&&  0 \leq y_{v} &\quad{\text{(for all $v \in V(G)$)}}
  \end{aligned}\]
  \caption{The linear program $P$.}
  \label{fig:LP}
\end{figure}

Let $P$ denote the linear program shown in Figure~\ref{fig:LP}. The
integer restriction of $P$, where $x_{e} \in \{0,1\}$, represents the
exact model for our \emph{weighted and cluster-size-bounded}
correlation-clustering problem. We interpret $x_e = 1$ as meaning
``the endpoints of $e$ lie in different clusters'' and we interpret
$x_e = 0$ as meaning ``the endpoints of $e$ lie in the same cluster''.
The triangle inequality $x_{uv} \leq x_{uz} + x_{zv}$ models the fact
that if two edges $uz$ and $zv$ are in the same cluster, then the edge
$uv$ should also belong to the same cluster. The new restriction on
cluster sizes is represented by the constraints $\sum_{v \neq u}(1 -
x_{uv}) \leq K + y_v$ together with the penalty term $\sum_{v \in
  V(G)}\mu_vy_v$ in the objective function; here, $y_v$ represents the
amount by which the cluster containing $v$ exceeds the size bound. To
simplify notation, we adopt the convention that $x_{uu}=0$ for all
$u$.

Since any actual clustering yields a feasible integer solution to $P$,
the optimal value of $P$ is a lower bound for the optimal cost of the
correlation-clustering problem. The idea behind the algorithm is to start
with an optimal solution to $P$ and ``round'' the solution to produce
a clustering. We prove that this rounding process only increases the cost
of the solution by a constant multiplicative factor.

Given any feasible solution $(x,y)$ to the linear program, where $x$
denotes the vector of all edge costs $x_e$, while $y$ denotes the
vector of all vertex penalties $y_v$, we produce a clustering $\cee$
via the following algorithm, where $\alpha \in (0, 1/2]$ is a parameter to be
determined later.
\begin{algorithm}
\caption{Round LP solution to obtain clustering, using a threshold parameter $\alpha \in (0,1/2]$.}
\label{alg:round}  
\begin{algorithmic}
\STATE{Let $S = V(G)$.}
\WHILE{$S \neq \emptyset$}
\STATE{Let the ``pivot vertex'' $u$ be an arbitrary element of $S$.}
\STATE{Let $T = \{w \in S-\{u\} \st x_{uw} \leq \alpha\}$.}
\IF{$\sum_{w \in T}x_{uw} \geq \alpha\sizeof{T}/2$}
\STATE{Output the singleton cluster $\{u\}$.}
\STATE{Let $S = S-\{u\}$.}
\ELSE
\STATE{Output the cluster $\{u\} \cup T$.}
\STATE{Let $S = S - (\{u\} \cup T)$.}
\ENDIF
\ENDWHILE  
\end{algorithmic}
\end{algorithm}
Note that the only difference between Algorithm~\ref{alg:round} and
the second stage of the Charikar--Guruswami--Wirth algorithm is the use of the parameter
$\alpha$ rather than the fixed value of $1/2$. As we subsequently prove, the choice of this
threshold allows for extensions in the range of values for the weights of edges that can
be accommodated by our algorithm.

We show next that the resulting clustering $\cee$ has cost at most
$c_{\alpha}\cost(x,y)$, where $c_{\alpha}$ is a constant depending
only on $\alpha$ and $\cost(x,y)$ denotes the cost of the original
solution of the linear program.  We follow the outline of the analysis
performed by Charikar--Guruswami--Wirth~\cite{CGW1, CGW2}, while
incorporating the constraints on weights and cluster sizes at hand.

In our derivations, we make frequent use of the \emph{LP-cost} and
\emph{cluster-cost} of an edge $e$: the LP-cost refers to the cost of that
edge using the LP solution $(x,y)$, i.e., the quantity $\wpp_ex_e +
\wmm_e(1-x_e)$, while the cluster-cost refers to the actual cost
incurred by that edge in the clustering produced by
Algorithm~\ref{alg:round}.
\begin{observation}\label{obs}
  Let $x$ be a feasible solution to $P$, and let $wz$ be an edge.
  For any vertex $u$, we have $x_{wz} \geq x_{uz} - x_{uw}$ and
  $1 - x_{wz} \geq 1 - x_{uz} - x_{uw}$.
\end{observation}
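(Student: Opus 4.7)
The plan is to deduce both inequalities directly from the triangle inequality constraints of the linear program $P$, namely $x_{uv} \leq x_{uz} + x_{zv}$, together with the convention $x_{uu} = 0$ and the bounds $0 \leq x_e \leq 1$. For the first inequality, I would apply the triangle constraint with the pivot $w$, which gives $x_{uz} \leq x_{uw} + x_{wz}$, and then rearrange to obtain $x_{wz} \geq x_{uz} - x_{uw}$. For the second inequality, I would apply the triangle constraint with the pivot $u$, which gives $x_{wz} \leq x_{uw} + x_{uz}$, and then subtract both sides from $1$ to obtain $1 - x_{wz} \geq 1 - x_{uz} - x_{uw}$.

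The only subtlety is that the linear program states the triangle inequality only for \emph{distinct} triples $u, v, z$, so I would separately verify the degenerate cases where $u \in \{w, z\}$ or $w = z$. In each of these, the convention $x_{uu} = 0$ together with $0 \leq x_e \leq 1$ collapses the claimed inequalities to trivially true statements: for example, when $u = w$ the first inequality becomes $x_{uz} \geq x_{uz} - 0$, and when $w = z$ it becomes $0 \geq x_{uz} - x_{uw}$, which holds since $x_{uz} \leq x_{uw} + x_{wz} = x_{uw}$ by triangle inequality (or trivially if any pair among $u, w, z$ coincides further). I do not expect any substantive obstacle here; the observation is essentially just a restatement of the triangle inequality constraint of $P$ in the form that will be convenient for the subsequent analysis of Algorithm~\ref{alg:round}.
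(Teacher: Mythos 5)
Your proof is correct and matches the paper's intent: the paper states this as an Observation with no written proof, treating it as an immediate consequence of the LP's triangle inequality constraints, which is exactly the derivation you give. Your extra care with degenerate triples is harmless but unnecessary, since the statement takes $wz$ to be an edge (so $w \neq z$) and the cases $u \in \{w,z\}$ reduce to trivialities under the convention $x_{uu}=0$ and $x_e \geq 0$.
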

The next lemma bounds the LP-cost of a subset of edges used in the region-growing Algorithm~\ref{alg:round}.
All notation is per the definition of the algorithm.
\begin{lemma}\label{lem:Rcost}
  Let $(x,y)$ be a feasible solution to $P$.  Suppose that $z \in S$ at
  the beginning of the iteration, let $u$ be the pivot vertex, and let
  $R \subset \{u\} \cup T$. For any $\zeta \in [0,1]$, if $\sum_{v \in R}x_{uv} \leq \lfrac{\alpha\sizeof{R}}{2}$ and $x_{uv} \leq \zeta$ for all $v \in R$, then the total LP-cost of the edges
  joining $z$ and $R$ is at least
  \[ \sum_{v \in R}\left[\wpp_{vz}x_{uz} + \wmm_{vz}(1 - x_{uz}) - \zeta(\wpp_{vz} + \wmm_{vz}) + (\zeta-\frac{\alpha}{2})\right]. \]
\end{lemma}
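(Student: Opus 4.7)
The plan is to bound the LP-cost of each edge $vz$ with $v \in R$ separately, using the two triangle-inequality bounds from Observation~\ref{obs} applied at the pivot $u$, and then sum over $v \in R$, invoking the averaging hypothesis $\sum_{v \in R} x_{uv} \leq \alpha\sizeof{R}/2$ exactly once at the end.

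For each $v \in R$, Observation~\ref{obs} gives both $x_{vz} \geq x_{uz} - x_{uv}$ and $1 - x_{vz} \geq 1 - x_{uz} - x_{uv}$. Multiplying these by the nonnegative weights $\wpp_{vz}$ and $\wmm_{vz}$ respectively and adding, I obtain
\[ \wpp_{vz}x_{vz} + \wmm_{vz}(1 - x_{vz}) \geq \wpp_{vz}x_{uz} + \wmm_{vz}(1 - x_{uz}) - x_{uv}(\wpp_{vz} + \wmm_{vz}), \]
i.e., a lower bound on the LP-cost of $vz$. (The case $v = u$ is covered by the convention $x_{uu} = 0$, and in that case the bound holds with equality.)

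Next, to introduce the parameter $\zeta$, I would rewrite $-x_{uv} = -\zeta + (\zeta - x_{uv})$, giving the equivalent lower bound
\[ \wpp_{vz}x_{uz} + \wmm_{vz}(1 - x_{uz}) - \zeta(\wpp_{vz} + \wmm_{vz}) + (\zeta - x_{uv})(\wpp_{vz} + \wmm_{vz}). \]
Since $x_{uv} \leq \zeta$, the factor $\zeta - x_{uv}$ is nonnegative, and by Assumption~\ref{assumption} we have $\wpp_{vz} + \wmm_{vz} \geq 1$, so the last term is at least $\zeta - x_{uv}$.

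Summing over $v \in R$ and applying the averaging hypothesis,
\[ \sum_{v \in R}(\zeta - x_{uv}) = \zeta\sizeof{R} - \sum_{v \in R} x_{uv} \geq \sizeof{R}\bigl(\zeta - \tfrac{\alpha}{2}\bigr) = \sum_{v \in R}\bigl(\zeta - \tfrac{\alpha}{2}\bigr), \]
which, combined with the previous per-vertex bound, yields exactly the claimed inequality. The only subtle point is recognizing that one should use both triangle-inequality bounds simultaneously (to pick up both $\wpp_{vz}$ and $\wmm_{vz}$ contributions), and then leverage the weight hypothesis $\wpp_{vz} + \wmm_{vz} \geq 1$ to convert a weighted slack into the clean additive $(\zeta - \alpha/2)$ term; the rest is bookkeeping.
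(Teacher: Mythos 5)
Your proof is correct and follows essentially the same route as the paper: apply Observation~\ref{obs} to lower-bound the LP-cost of each edge $vz$ by $\wpp_{vz}x_{uz}+\wmm_{vz}(1-x_{uz})-(\wpp_{vz}+\wmm_{vz})x_{uv}$, then control $\sum_v(\wpp_{vz}+\wmm_{vz})x_{uv}$ using the averaging hypothesis, the bound $x_{uv}\le\zeta$, and $\wpp_{vz}+\wmm_{vz}\ge 1$. The only difference is a cosmetic one in the algebraic bookkeeping (you split off $\zeta(\wpp_{vz}+\wmm_{vz})$ and bound $(\zeta-x_{uv})(\wpp_{vz}+\wmm_{vz})\ge\zeta-x_{uv}$, whereas the paper splits off $x_{uv}$ and bounds $(\wpp_{vz}+\wmm_{vz}-1)x_{uv}\le(\wpp_{vz}+\wmm_{vz}-1)\zeta$), which yields the same final estimate.
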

\begin{proof}
  By Observation~\ref{obs}, we have $x_{vz} \geq x_{uz}-x_{uv}$ and $1 - x_{vz} \geq 1 - x_{uz} - x_{uv}$
  for each edge $vz$. Thus, we have the following lower bound on the total LP-weight of the edges joining $z$ and $R$:
  \begin{align*}
    \sum_{v \in R}\left[\wpp_{vz}x_{vz} + \wmm_{vz}(1-x_{vz})\right] &\geq \sum_{v \in R}\left[\wpp_{vz}(x_{uz} - x_{uv}) + \wmm_{vz}(1 - x_{uz} - x_{uv})\right] \\
    &= \sum_{v \in R}\left[\wpp_{vz}x_{uz} + \wmm_{vz}(1 - x_{uz})\right] - \sum_{v \in R}(\wpp_{vz} + \wmm_{vz})x_{uv}.
  \end{align*}
  Using $\sum_{v \in R}x_{uv} \leq \alpha\sizeof{R}/2$ and $x_{uv} \leq \zeta$ for $v \in R$, we bound $\sum_{v \in R}(\wpp_{vz} + \wmm_{vz})x_{uv}$ as follows:
  \begin{align*}
    \sum_{v \in R} (\wpp_{vz} + \wmm_{vz})x_{uv} &= \sum_{v \in R}x_{uv} + \sum_{v \in R} (\wpp_{vz} + \wmm_{vz} - 1)x_{uv} \\
    &\leq \frac{\alpha\sizeof{R}}{2} + \sum_{v \in R} (\wpp_{vz} + \wmm_{vz} - 1)x_{uv} \\
    &= \sum_{v \in R} \left[(\wpp_{vz} + \wmm_{vz} - 1)x_{uv} + \frac{\alpha}{2}\right] \\
    &\leq \sum_{v \in R} \left[\zeta(\wpp_{vz} + \wmm_{vz}) + (\frac{\alpha}{2}-\zeta)\right].
  \end{align*}
  Note that in the last inequality we used the fact that $\wpp_{vz} + \wmm_{vz} \geq 1$.
\end{proof}
\begin{theorem}\label{thm:appx}
  Let $G$ be a correlation-clustering instance satisfying
  Assumption~\ref{assumption}, and let $\alpha \in (0, 1/2)$.  If $(x,y)$ is any
  solution to the linear program $P$, then the clustering produced by
  Algorithm~\ref{alg:round} using input $(x,y)$ and parameter $\alpha$ has cost at most
  $c_\alpha\cost(x,y)$, where
  \[ c_\alpha = \max\left\{\muu,\ \frac{2\alpha\muu}{1-2\alpha} + \frac{1}{1 - 2\alpha + \frac{\alpha}{2\tau}},\ \frac{2}{\alpha}\right\} \]
  and $\muu = \max_{u \neq v}(\mu_u + \mu_v)$. Also, if $\muu = 0$ and $\alpha = 1/2$, then
  the same conclusion holds with $c_\alpha = \max\left\{\frac{1}{1-2\alpha + \frac{\alpha}{2\tau}},\ \frac{2}{\alpha}\right\}$.
\end{theorem}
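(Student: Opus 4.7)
My plan is to adapt the iteration-by-iteration analysis of Charikar--Guruswami--Wirth, showing that in each iteration the cluster-cost produced by Algorithm~\ref{alg:round} is at most $c_\alpha$ times the LP-cost ``allocated'' to that iteration, where the allocation charges each edge to whichever endpoint is removed from $S$ first. For the iteration that outputs a cluster $C$, I will charge the internal cost $\sum_{vw:\,v,w\in C}\wmm_{vw}$, the external cost $\sum_{v\in C,\,z\in S\setminus C}\wpp_{vz}$, and the penalty $\max\{0,\sizeof{C}-K-1\}\sum_{v\in C}\mu_v$ against the LP values of those same edges together with $\sum_{v\in C}\mu_v y_v$. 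Summing the per-iteration inequality yields the theorem.

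For the singleton case $C=\{u\}$, there are no internal edges and no penalty, so I only need to bound $\sum_{z\in S\setminus\{u\}}\wpp_{uz}$. I will split these into ``long'' edges with $x_{uz}>\alpha$ and ``short'' edges with $z\in T$. Long edges give per-edge ratio $1/\alpha$ directly, since the LP pays at least $\alpha\wpp_{uz}$. For short edges, Assumption~\ref{assumption} and $x_{uz}\leq\alpha\leq 1/2$ give the pointwise inequality $\wpp_{uz}x_{uz}+\wmm_{uz}(1-x_{uz})\geq x_{uz}$; combining this with the triggering condition $\sum_{z\in T}x_{uz}\geq\alpha\sizeof{T}/2$ and $\wpp_{uz}\leq 1$ yields aggregate ratio $2/\alpha$.

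For the non-singleton case $C=\{u\}\cup T$, the triggering condition $\sum_{w\in T}x_{uw}<\alpha\sizeof{T}/2$ places us in the hypotheses of Lemma~\ref{lem:Rcost} with $R=C$ and $\zeta=\alpha$. Applied once per $z\in S\setminus C$, the lemma handles the external cost: since $x_{uz}>\alpha$ and $\sum_{v\in C}\wpp_{vz}\leq\sizeof{C}$, the $(\alpha/2)\sizeof{C}$ slack term alone yields external ratio $2/\alpha$. For internal edges $vw\subseteq C$, the triangle inequality gives $x_{vw}\leq 2\alpha$, so the naive per-edge bound is $1/(1-2\alpha)$; to obtain the sharper $\frac{1}{1-2\alpha+\alpha/(2\tau)}$ I expect to aggregate the internal LP-cost using $\wmm_{vw}\leq\tau$ to squeeze an extra $\frac{\alpha}{2\tau}$ factor out of the positive contributions $\wpp_{vw}x_{vw}$ inside $C$, in the same spirit as the proof of Lemma~\ref{lem:Rcost}.

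The main obstacle will be the penalty accounting, which is the new ingredient beyond Charikar--Guruswami--Wirth. The cluster penalty $(\sizeof{C}-K-1)\sum_{v\in C}\mu_v$ cannot always be absorbed by the LP penalty $\sum_{v\in C}\mu_v y_v$, since the LP constraint $\sum_{w\neq v}(1-x_{vw})\leq K+y_v$ combined with internal $x_{vw}\leq 2\alpha$ gives only $y_v\geq(\sizeof{C}-1)(1-2\alpha)-K$, which may be negative. The plan is to rewrite the shortfall as $\sum_{w\in C}x_{vw}$ (modulo the $y_v=0$ slack) and route it through the LP-cost of internal edges by means of $\wpp+\wmm\geq 1$; this is what should produce the $\frac{2\alpha\muu}{1-2\alpha}$ correction, with $\muu$ arising from pairing the vertex penalties $\mu_v+\mu_w\leq\muu$ across pairs inside $C$. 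The standalone $\muu$ in the outer $\max$ covers the boundary case in which a single pair $\{v,w\}\subseteq C$ with $\mu_v+\mu_w=\muu$ dominates the penalty. Finally, the endpoint $\alpha=1/2$, $\muu=0$ needs separate treatment because $1-2\alpha=0$; but there the penalty term drops out entirely and only the two-term bound $\max\{\tfrac{1}{1-2\alpha+\alpha/(2\tau)},\,2/\alpha\}$ remains.
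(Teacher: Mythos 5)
Your overall charging strategy and the singleton-cluster case match the paper's proof, but the cross-edge bound in the non-singleton case is wrong as stated, and the internal-edge and penalty analyses are too sketchy to check. The concrete error: you apply Lemma~\ref{lem:Rcost} with $R=\{u\}\cup T$ and $\zeta=\alpha$ and claim the $(\alpha/2)\sizeof{C}$ slack term alone yields external ratio $2/\alpha$. But the lemma's lower bound is
\[
\sum_{v\in C}\Bigl[\wpp_{vz}(x_{uz}-\alpha) + \wmm_{vz}(1-x_{uz}-\alpha) + \tfrac{\alpha}{2}\Bigr],
\]
and the middle term $\wmm_{vz}(1-x_{uz}-\alpha)$ is \emph{negative} whenever $x_{uz}>1-\alpha$ (which is compatible with $x_{uz}>\alpha$ since $\alpha<1/2$), and can be as small as $-\tau\alpha$. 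For large $\tau$ this dominates the $\alpha/2$ slack, so the claimed inequality simply fails. The paper avoids this by splitting on $x_{uz}$ versus a threshold $1-\eta$ with $\eta=\alpha-\tfrac{\alpha}{2\tau}$: when $x_{uz}\geq 1-\eta$ it abandons the lemma and uses the pointwise bound $x_{vz}\geq x_{uz}-x_{uv}\geq 1-\eta-\alpha = 1-2\gamma$, and it applies the lemma only on the interval $(\alpha,1-\eta)$, exploiting linearity in $x_{uz}$ to evaluate at both endpoints.

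Similar machinery is missing for the internal edges: the paper linearly orders $\{u\}\cup T$ by $x_{uv}$, defines $R_z=\{v\in\{u\}\cup T : v<z\}$, and applies Lemma~\ref{lem:Rcost} to each $R_z$ with $\zeta=x_{uz}$, again case-splitting on $x_{uz}$ versus $\gamma=\alpha-\tfrac{\alpha}{4\tau}$; your application of the lemma with $R=C$ says nothing about internal edges, since the lemma concerns edges from an external $z$ into $R$. Your penalty sketch has the right ideas in rough form, but to close it you need the explicit per-edge charge $\mu_z\bigl(\wpp_{vz}x_{vz}+\tfrac{1}{\beta}\wmm_{vz}(1-x_{vz})\bigr)$ with $\beta=\tfrac{1-2\alpha}{2\alpha}$, together with a charge of $\mu_z y_z$ to the vertex $z$ itself, combined via the inequality $1-x_{vz}\geq 1-2\alpha=2\alpha\beta\geq\beta x_{vz}$ and the LP constraint $\sum_{v\neq z}(1-x_{vz})\leq K+y_z$.
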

\begin{proof}
  The idea behind the proof is as follows. After each cluster output $\{u\} \cup T$ is produced, one is asked
  to ``pay
  for'' the cluster-cost of the edges which are both incident to
  $\{u\} \cup T$ and were contained within the pool of vertices $S$ available at the beginning of
  the given iteration of the algorithm. We pay for the cluster-cost by
  ``charging'' some quantity to each of these edges. Subsequently, we show that
  the total amount charged to each edge is at most $c_{\alpha}$ times
  its LP-cost.  While some edges may be charged multiple times, one can 
  appropriately bound the \emph{total} amount of charge accrued
  by each edge.

  Suppose that some cluster $\{u\} \cup T$ has just been selected as an output of Algorithm 1.
  We split the analysis of the claimed result into two cases, according to whether or not $\{u\} \cup T$ is a
  singleton set.

  \caze{1}{The output is a singleton cluster $\{u\}$.} The total cluster
  cost when outputting a singleton cluster $\{u\}$ is $\sum_{v \in S-u}
  \wpp_{uv}$, while the total LP-cost accrued by edges incident to $u$
  is $\sum_{v \in S-u} \wpp_{uv}x_{uv}$.

  If the singleton $\{u\}$ is output, then we have
  \[ \sum_{v \in T}x_{uv} \geq \frac{\alpha\sizeof{T}}{2}. \] For each $v \in T$, we have $x_{uv} \leq \alpha$. For such
  $x_{uv}$, we also have $1-x_{uv} \geq x_{uv}$, since $\alpha < 1/2$. This
  yields the following lower bound on the LP-cost of $uv$:
  \[ \wpp_{uv} x_{uv} + \wmm_{uv} (1-x_{uv}) \geq \wpp_{uv} x_{uv} +
  \wmm_{uv} x_{uv} \geq x_{uv}, \] where the last inequality follows from the 
  bound $\wpp_{uv} + \wmm_{uv} \geq 1$. Thus, each edge $uv$ has
  LP-cost at least $x_{uv}$, and so the edges joining $u$ and $T$ have
  total LP-cost at least $\alpha\sizeof{T}/2$. Each such edge $uv$
  incurs cluster-cost $\wpp_{uv}$, which is at most equal to $1$.  Thus,
  charging $(2/\alpha)x_{uv}$ to each edge $uv$ for $v \in T$ produces enough ``charge'' 
  to pay for the cluster-cost of edges $v \in T$, and at the same time, each edge is charged at most $2/\alpha$ times its LP-cost.

  For $v \in S-(T \cup \{u\})$, we have $x_{uv} > \alpha$, so each edge
  $uv$ incurs an LP-cost at least $\alpha \wpp_{uv}$ and a cluster-cost at
  most $\wpp_{uv}$. Thus, charging such edge $(1/\alpha)\wpp_{uv}x_{uv}$ compensates for the cluster-cost of the edges under consideration.

  \smallskip
  \caze{2}{The output is a nonsingleton cluster $\{u\} \cup T$.} We
  first consider edges in $\{u\} \cup T$, and then consider edges
  joining $\{u\} \cup T$ with $S - (\{u\} \cup T)$. We conclude by
  describing how to pay the possible penalties for allowing the set
  $T$ to be too large. Let $\gamma = \alpha - \lfrac{\alpha}{4\tau}$,
  and note that $\lfrac{3\alpha}{4} \leq \gamma \leq \alpha$.

  \textbf{Edges within $\{u\} \cup T$.}  Assume that the vertices of
  $\{u\} \cup T$ are linearly ordered so that $v \leq z$ implies
  $x_{uv} \leq x_{uz}$. For each $z \in T$, let $R_z = \{v \in \{u\}
  \cup T \st v < z\}$, and let $E_z = \{vz \st v \in R_z\}$. Each edge
  contained in $\{u\} \cup T$ appears in exactly one set $E_z$, so for
  each $z \in T$, we show how to pay for the edges in $E_z$.

  If $x_{uz} \leq \gamma$, then $x_{uv} \leq \gamma$ for all $v \in R_z$. 
  Each edge $vz \in E_z$ incurs cluster-cost $\wmm_{vz}$ and LP-cost at least $\wmm_{vz}(1-x_{vz})$,
  so for each $vz \in E_z$, we have
  \[ 1 - x_{vz} \geq 1 - x_{uv} - x_{vz} \geq 1 - 2\gamma, \]
  so that charging $\frac{1}{1 - 2\gamma}\wmm_{vz}(1-x_{vz})$ to the edge $vz$ pays
  for its cluster-cost.

  Otherwise, $x_{uz} > \gamma$. The ordering on $\{u\} \cup T$ implies
  that $\sum_{v \in R_z}x_{uv} \leq \lfrac{\alpha\sizeof{R_z}}{2}$ and
  that $x_{uv} \leq x_{uz}$ for all $v \in R_z$. Applying
  Lemma~\ref{lem:Rcost} with $R = R_z$ and $\zeta = x_{uz}$ and rearranging the
  terms in the sum gives the following lower bound on total LP-cost of the edges in $E_z$:
  \begin{align*}\sum_{v \in R_z}\left[\wpp_{vz}x_{vz} + \wmm_{vz}(1-x_{vz})\right] &\geq 
    \sum_{v \in R_z}\Big[\wpp_{vz}x_{uz} + \wmm_{vz}(1-x_{uz}) - \\
    &\qquad\qquad\quad x_{uz}(\wpp_{vz} + \wmm_{vz}) + (x_{uz} - \frac{\alpha}{2})\Big] \\
      &=\sum_{v \in R_z}\left[\wmm_{vz}(1-2x_{uz}) + (x_{uz} - \frac{\alpha}{2})\right].
\end{align*}
This lower bound is linear in $x_{uz}$. When $x_{uz} = \gamma$, we see that the
  total LP-cost of $E_z$ is at least $(1-2\gamma)\sum_{v \in R_z}\wmm_{vz}$. When $x_{uz} = \alpha$,
  the total LP-cost of $E_z$ is at least
  \[ \sum_{v \in R}\wmm_{vz}(1 - 2\alpha + \frac{\alpha}{2\tau}), \]
  which is equal to $(1-2\gamma)\sum_{v \in R}\wmm_{vz}$ by the choice of $\gamma$.
  Thus, charging $\frac{1}{1-2\gamma}\wmm_{vz}(1-x_{vz})$ to each edge $vz \in E_z$ pays for the
  total cluster-cost of the edges in $E_z$.

  \textbf{Edges joining $\{u\} \cup T$ with $S - (\{u\} \cup T)$.} Let
  $z$ be a vertex such that $z \notin \{u\} \cup T$. A \emph{cross-edge} for $z$
  is an edge from $z$ to $\{u\} \cup T$. We show that the cross-edges
  for $z$ have total cluster-cost that is at most $\max\{\frac{1}{1-2\alpha}, \frac{2}{\alpha}\}$
  times their total LP-cost.  Note that whenever $vz$ is a cross-edge,
  we have $x_{uv} \leq \alpha$, by the definition of $T$. Each
  cross-edge $vz$ incurs cluster-cost $\wpp_{vz}$ and LP-cost
  $\wpp_{vz}x_{vz} + \wmm_{vz}(1-x_{vz})$.

  Let $\eta = \alpha - \frac{\alpha}{2\tau}$. If $x_{uz} \geq 1-\eta$,
  then based on Observation~\ref{obs}, we have
  \[x_{vz} \geq x_{uz} - x_{uv} \geq 1 - \eta - \alpha = 1-2\gamma \]
  for every cross-edge $vz$.  These edges have LP-cost at least $(1 -
  \eta - \alpha)\wpp_{vz}$, and therefore have cluster-cost at most
  $\frac{1}{1 - 2\gamma}$ times their LP-cost.

  It remains to handle the case $x_{uz} \in (\alpha, 1-\eta)$. Here,
  we seek a lower bound on the total LP-cost of the cross-edges for
  $z$. Note that the total cluster-cost of these edges is $\sum_{v \in
    \{u\} \cup T}\wpp_{vz}$, which is at most $\sizeof{T} + 1$ since each
  $\wpp_{vz} \leq 1$.

  Using Lemma~\ref{lem:Rcost} with $R = \{u\} \cup T$ and $\zeta=\alpha$,
  we see that the total LP-cost of the cross-edges for $z$ is at least
  \[ \sum_{v \in \{u\} \cup T}\left[\wpp_{vz}x_{uz} + \wmm_{vz}(1 - x_{uz}) - \alpha(\wpp_{vz} + \wmm_{vz}) + \frac{\alpha}{2}\right].\]
  This lower bound is a linear function in $x_{uz}$. We consider next the behavior of this function on the interval $(\alpha, 1-\eta)$.

  When $x_{uz} = \alpha$, the lower bound simplifies to
  \[ \sum_{v \in \{u\} \cup T}\left[\alpha \wpp_{vz} + (1-\alpha)\wmm_{vz} - \alpha(\wpp_{vz} + \wmm_{vz}) + \frac{\alpha}{2}\right], \]
  which is at least $\alpha(\sizeof{T}+1)/2$ since $\alpha < 1/2$ implies $(1-\alpha)\wmm_{vz} \geq \alpha\wmm_{vz}$.

  When $x_{uz} = 1 - \eta$, the lower bound simplifies to
  \begin{align*}
    \sum_{v \in \{u\} \cup T}\left[ (1-\eta-\alpha)\wpp_{vz} - \frac{\alpha}{2\tau}\wmm_{vz} + \frac{\alpha}{2}\right] 
    &\geq \sum_{v \in \{u\} \cup T}\left[ (1-\eta-\alpha)\wpp_{vz} \right] \\
    &= \sum_{v \in \{u\} \cup T}\left[ (1-2\gamma)\wpp_{vz} \right],
  \end{align*}
  where we used $\wmm_{vz} \leq \tau$. In all cases, charging
  $\max\{\frac{1}{1-2\gamma}, \frac{2}{\alpha}\}$ times the LP-cost of
  the edges pays for their cluster-cost.

  \textbf{Paying for vertex penalties.} Finally, if $\sizeof{\{u\}
    \cup T} > K+1$, then we must pay for the ``penalty cost'' incurred
  by the cluster $\{u\} \cup T$. If $\muu = 0$, then there are no penalties
  to pay, so we may assume that $\muu > 0$ and $\alpha < 1/2$.

  Fix some vertex $z \in \{u\} \cup T$, and let $T_z = (\{u\} \cup T)
  - \{z\}$; note that $\sizeof{T_z} = \sizeof{T}$. The LP-penalty paid
  by $z$ is $\mu_z y_z$, while the cluster-penalty paid by $z$ is
  $\mu_z(\sizeof{T} - K)$. The idea is to charge the edges incident to
  $z$, as well as the vertex $z$ itself, in order to pay the
  cluster-penalty.

  Let $\beta = \frac{1-2\alpha}{2\alpha}$, and let $E_z$ be the edge set defined by
  \[ E_z = \{vz \st v \in T_z\}.\]
  We charge $\mu_z(\wpp_{vz}x_{vz} + \frac{1}{\beta}\wmm_{vz}(1-x_{vz}))$ to each edge $vz \in E_z$.
  Observe that each edge contained in $\{u\} \cup T$ is only charged this way at its endpoints.
  The total charge from the edges in $E_z$ is
  \[ \mu_z\left[\sum_{v \in T_z}\wpp_{vz}x_{vz} + \frac{1}{\beta}\sum_{v \in T_z}\wmm_{vz}(1-x_{vz})\right]. \]
  We also charge $\mu_zy_z$ to the vertex $z$ itself.
  We wish to show that the total charge from the edges in $E_z$ is at least $\mu_z(\sizeof{T}-K-y_z)$. Observe that
  the inequality $\sum_{v \in T_z}(1-x_{vz}) \leq K+y_z$ implies that $\sum_{v \in T_z}x_{vz} \geq \sizeof{T_z}-K-y_z$.
  Furthermore, Observation~\ref{obs} together with the inequalities $x_{uv},x_{uz} \leq \alpha$ yields
  \[ 1-x_{vz} \geq 1 - x_{uv} - x_{uz} \geq 1-2\alpha = \beta(2\alpha) \geq \beta x_{vz}. \]
  Thus, we may write
  \begin{align*}
    \sum_{v \in T_z}\wpp_{vz}x_{vz} + \frac{1}{\beta}\sum_{v \in T_z}\wmm_{vz}(1-x_{vz}) &= \sum_{v \in T_z}x_{vz} + \sum_{v \in T_z}(\wpp_{vz} - 1)x_{vz} + \frac{1}{\beta}\sum_{v \in T_z}\wmm_{vz}(1-x_{vz}) \\
    &\geq \sizeof{T} - K - y_v + \sum_{v \in T_z}(\wpp_{vz} + \wmm_{vz} - 1)x_{vz} \\
    &\geq \sizeof{T} - K - y_v,
  \end{align*}
  where the last inequality is based on our assumption that $\wpp_{vz}
  + \wmm_{vz} \geq 1$.  It follows that charging
  $\mu_z[\wpp_{vz}x_{vz} + \frac{1}{\beta}\wmm_{vz}(1-x_{vz})]$ to
  each edge $vz \in E_z$ and charging $\mu_zy_z$ to $z$ itself pays
  for the cluster-penalty of $z$.  \smallskip

  In total, we have paid for all the cluster-costs by making the following charges
  to the edges, where $\muu = \max_{v\neq z \in V}(\mu_v + \mu_z)$: 
  \begin{itemize}
  \item Edges $vz$ within $\{u\} \cup T$ were charged at most
    $\frac{1}{1-2\gamma}\wmm_{vz}(1-x_{vz})$ to pay for their own
    cost. If $\muu > 0$, these edges are also charged at most
    $\muu(\wpp_{vz}x_{vz} + \frac{1}{\beta}\wmm_{vz}(1-x_{vz}))$ to
    pay for the cluster-penalties of their endpoints.  Since
    $\frac{1}{\beta} = \frac{2\alpha}{1-2\alpha}$, their total charge
    is at most $\max\{\muu, \frac{2\alpha\muu}{1-2\alpha} +
    \frac{1}{1-2\gamma}\}$ times their total LP-cost if $\muu > 0$, or
    simply $\frac{1}{1-2\gamma}$ times their LP-cost if $\muu = 0$.
  \item Edges $vz$ for which $v \in \{u\} \cup T$ and $z \in S -
    (\{u\} \cup T)$ were charged at most $\max\{\frac{1}{1-2\gamma},
    \frac{2}{\alpha}\}$ times their LP-cost.
  \item Vertices $z \in \{u\} \cup T$ were charged exactly their LP-cost to
    pay for their cluster-penalty.
  \end{itemize}
  It follows that the clustering $\cee$ has cost at most $c_\alpha\cost(x,y)$,
  where $c_{\alpha}$ is defined as in the statement of the theorem.
\end{proof}
We now determine the optimal value of $\alpha$ in terms of $\muu$ and
$\tau$, under the further hypothesis that $\muu \in (0,4]$. The case
$\muu=0$ will be considered separately. While this hypothesis is not
necessary for the validity of Theorem~\ref{thm:appx}, it simplifies the 
calculation of the approximation ratio. Theorem~\ref{thm:appx} shows
that Algorithm~\ref{alg:round} achieves the approximation ratio
\[ \max\left\{\muu,\ \frac{2\alpha\muu}{1-2\alpha} + \frac{1}{1 - 2\gamma},\ \frac{2}{\alpha}\right\}, \]
where $\gamma = \alpha - \alpha/4\tau$.
Since we have assumed that $\muu \leq 4 \leq 2/\alpha$, the total approximation ratio is minimized when
\[ \frac{2\alpha\muu}{1-2\alpha} +
\frac{1}{1-2\alpha+\frac{\alpha}{2\tau}} = \frac{2}{\alpha\hangs.}\]
It is straightforward to see that for any fixed $\tau \geq 0$ and any
$\muu \in (0,4]$, this cubic equation in $\alpha$ has a unique
solution on $(0, \frac{1}{2})$. Rather than seeking a general
analytical expression for the optimal $\alpha$, which would be too
complicated to be meaningful, we consider particular parameter values
for which there is a simple explicit solution.

When $\muu = 0$, Theorem~\ref{thm:appx} yields the simpler approximation ratio
$\max\{\frac{1}{1-2\gamma}, \frac{2}{\alpha}\}$, so the approximation constant is minimized
when
\[ \frac{1}{1-2\alpha+\frac{\alpha}{2\tau}} = \frac{2}{\alpha\hangs.}\]
Hence, the optimal value of $\alpha$ equals $\frac{2\tau}{5\tau - 1}$, with the 
resulting approximation ratio equal to $5 - (\lfrac{1}{\tau})$. If
additionally $\tau=1$, then we are enforcing a slight generalization of the
probability constraints, and the resulting approximation ratio is $4$,
as in the original paper of Charikar, Guruswami, and
Wirth~\cite{CGW1,CGW2}.  Indeed, in this special case our LP-rounding
algorithm reduces exactly to the aforementioned algorithm.

In the limit $\tau \to \infty$, we are allowing the weights $\wmm_e$
to be arbitrarily large.  In this case, when $\muu > 0$, the total cost
is minimized when $\alpha = \frac{-5 + \sqrt{25 + 16\muu}}{4\muu}$.
When $\muu = 2$, we are effectively enforcing a hard constraint on the
cluster sizes, yielding an approximation ratio roughly equal to
$6.275$. When $\muu = 0$, the total cost is instead minimized when
$\alpha = 2/5$, yielding approximation ratio $5$.

Finally, when $\muu = 2$ and $\tau=1$, we are effectively enforcing
the probability constraints with a hard constraint on the cluster
sizes. In this case, the optimal choice for $\alpha$ is $1/3$, yielding 
approximation ratio $6$. 

Our strategy in this section has been to incorporate the desired size
bounds into both the LP and the rounding procedure. A different
strategy is to solve the LP without any size bound constraints, and
impose the size bounds only in the rounding constraints.  Preliminary
work has shown that constant-factor approximation algorithms can also
be devised along these lines, under the same weight regime, and that
such algorithms can avoid pathological behavior that is sometimes
exhibited by Algorithm~\ref{alg:round}.  Future work will pursue this
line of inquiry further.
 
As a final note, we remark on the complexity of solving the LP and
running Algorithm~\ref{alg:round}.  Interior point LP solvers based on
Karmarkar's method~\cite{karmarkar,interior} require
$O(N^{3.5}\,S^2\,\log S\, \log\log S)$ operations in the worst case,
where $N$ denotes the number of variables, and $S$ denotes the input
size of the problem. For the case of correlation clustering,
$N=O(\sizeof{V(G)}^2)$. The complexity of Algorithm 1 is
$O(\sizeof{V(G)} + \sizeof{E(G)})$, and the higher computational cost
is clearly incurred by the LP solver.

\section{A Random Pivoting Algorithm}\label{sec:pivot}
In this section, we describe a randomized $7$-approximation algorithm
for the unweighted correlation clustering problem with bounded cluster
sizes, based on the \ccpivot{} algorithm of Ailon, Charikar, and
Newman~\cite{ACN1,ACN2}, which is shown in
Algorithm~\ref{alg:ccpivot}. (\ccpivot{} is also known as
\textsc{KwikCluster}.) While this approximation ratio is worse than
the ratio of $6$ achieved in Section~\ref{sec:two-step}, this method
does not require solving a linear program, and has lower asymptotic
complexity than the best known LP solvers. The \ccpivot{} method is
also amenable to parallelization, which is a significant advantage for
large-scale problems encountered in social and biological network
analysis~\cite{parallel}. We discuss parallelization in more detail at
the end of this section.

We now require an \emph{unweighted} instance of the
correlation-clustering problem, and we require hard constraints on the
cluster sizes. Expressed in the language of weights, we require the
following assumptions:
\begin{assumption}\label{asu:ccpivot}
  We assume that $uv \in E(G)$ for every pair of distinct vertices
  $u,v \in V(G)$.  Furthermore, we assume that the edges can be
  partitioned as $E(G) = E^+(G) \cup E^-(G)$, where $\wpp_e = 1$ and
  $\wmm_e = 0$ for all $e \in E^+(G)$, while $\wpp_e = 0$ and $\wmm_e =
  1$ for all $e \in E^-(G)$. Finally, we assume that $\mu_v = 1$ for
  all $v \in V(G)$.
\end{assumption}
For $X \subset E(G)$, we let $d_X(v)$ denote the number of edges of
$X$ incident to $v$, and we let $d^+(v)$ denote $d_{E^+(G)}(v)$. We
also use $N^+(v)$ to denote the set of neighbors of $v \in V(G)$
connected by positively labeled edges.
\begin{algorithm}
\caption{\ccpivot{} algorithm~\cite{ACN1,ACN2}.}
\label{alg:ccpivot}  
\begin{algorithmic}
\STATE{Let $S = V(G)$.}
\WHILE{$S \neq \emptyset$}
\STATE{Pick $v \in S$ uniformly at random.}
\STATE{Let $T = (\{v\} \cup N^+(v)) \cap S$.}
\STATE{Output the cluster $T$.}
\STATE{Let $S = S-T$.}
\ENDWHILE
\end{algorithmic}
\end{algorithm}
\begin{algorithm}
  \caption{Approximation algorithm for correlation clustering with cluster size at most $K+1$.}
  \label{alg:callcc}
  \begin{algorithmic}
    \STATE{Take $X \subset E^+(G)$ to be the smallest set such that $d^+(v) - d_X(v) \leq K$ for all $v$.}
    \STATE{Let $H$ be the labeled graph with $E^+(H) = E^+(G) - X$ and $E^-(H) = E^-(G) \cup X$.}
    \STATE{Run $\ccpivot$ on $H$.}
  \end{algorithmic}
\end{algorithm}

Our algorithm for solving the correlation-clustering problem with
cluster-size constraints, given in Algorithm~\ref{alg:callcc}, is in
some sense the obvious algorithm: we first remove positive edges from
$G$, if necessary, so that every vertex has positive degree at most
$K$, and then run \ccpivot{} on the resulting graph $H$, taking
advantage of the fact that \ccpivot{} always clusters its pivot vertex
$v$ with at most $d^+(v)$ vertices. It is known that the set $X$
required in the first step can be found in polynomial
time~\cite{Gabow,HellKirkpatrick,LovaszPlummer} -- more precisely,
that it can be found in time
$O(\sqrt{K\sizeof{V(G)}}\sizeof{V(G)}^2)$.  Since \ccpivot{} itself
runs in time $O(V(G) + E(G))$, the running time of
Algorithm~\ref{alg:callcc} is dominated by finding the set $X$.
\begin{theorem}\label{thm:7appx}
  Algorithm~\ref{alg:callcc} is an expected $7$-approximation algorithm.
\end{theorem}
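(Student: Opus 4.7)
The plan is to first establish feasibility of Algorithm~\ref{alg:callcc}'s output and then compare its expected cost on $G$ to the optimal size-constrained cost on $G$, routing the bookkeeping through the modified graph $H$. Feasibility is immediate: since $X$ is chosen so that $d^+(v) - d_X(v) \leq K$ for every $v$, every vertex of $H$ has positive degree at most $K$. When \ccpivot{} selects a pivot $v$ in $H$, the output cluster is contained in $\{v\} \cup N^+_H(v)$ and hence has size at most $K+1$, so no oversized-cluster penalty is ever incurred and the algorithm's cost equals the ordinary correlation-clustering cost on $G$.

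Let $\opt$ denote the optimal cost of the size-constrained problem on $G$, and let $\opt_H$ denote the optimal unweighted correlation-clustering cost on $H$. The central arithmetic step is the inequality $\sizeof{X} \leq \opt$. As remarked in Section~2, we may assume an optimal clustering $\cee^*$ contains no cluster of size exceeding $K+1$, since oversized clusters can be broken up without increasing cost (because $\wpp_e \leq 1$). Let $X^*$ be the set of positive edges of $G$ lying between distinct clusters of $\cee^*$. For each $v$, at most $K$ positive neighbors of $v$ can lie in $v$'s cluster, so $d_{X^*}(v) \geq d^+(v) - K$; thus $X^*$ satisfies the defining requirement on $X$, and minimality of $X$ gives $\sizeof{X} \leq \sizeof{X^*} \leq \opt$. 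Next, for any clustering $\cee$, write $\cost_G(\cee)$ and $\cost_H(\cee)$ for its ordinary correlation-clustering costs on the two labeled complete graphs; edges outside $X$ contribute identically to both costs, while each edge in $X$ contributes $1$ to exactly one of the two costs depending on whether it lies between or within a cluster of $\cee$. Hence $\sizeof{\cost_G(\cee) - \cost_H(\cee)} \leq \sizeof{X}$; applying this to $\cee^*$ yields $\opt_H \leq \cost_H(\cee^*) \leq \opt + \sizeof{X}$, while applying it to the random output $\cee$ of \ccpivot{} on $H$ gives $\cost_G(\cee) \leq \cost_H(\cee) + \sizeof{X}$.

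The proof concludes by invoking the Ailon--Charikar--Newman $3$-approximation guarantee~\cite{ACN1,ACN2}: since $H$ is a complete labeled graph, $\ex[\cost_H(\cee)] \leq 3\opt_H \leq 3(\opt + \sizeof{X})$, and combining with the previous bounds gives
\[ \ex[\cost_G(\cee)] \leq 3\opt_H + \sizeof{X} \leq 3\opt + 4\sizeof{X} \leq 7\opt. \]
The main substantive step is the bound $\sizeof{X} \leq \opt$; once one recognizes that the set of positive disagreements of an optimal size-constrained clustering serves as a witness for the degree constraints defining $X$, the remainder reduces to routine bookkeeping of positive versus negative disagreements against an unconstrained $3$-approximation on $H$.
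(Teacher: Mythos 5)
Your proof is correct and takes essentially the same approach as the paper: bound $\ex[\cost_G(\cee)] \leq 3\opt_H + \sizeof{X}$ via the Ailon--Charikar--Newman guarantee, and then control this by $7\opt$ using the two facts $\sizeof{X} \leq \opt$ and $\opt_H \leq \opt + \sizeof{X}$. The paper packages the final arithmetic as a convex combination $\opt_G \geq \tfrac{3}{7}(\opt_H - \sizeof{X}) + \tfrac{4}{7}\sizeof{X}$ and states the bound $\opt_G \geq \sizeof{X}$ more tersely (``the positive edges contained within clusters constitute a subgraph of maximum degree at most $K$''), whereas you chain the inequalities directly and spell out the witness-set argument for $\sizeof{X} \leq \opt$; these are the same argument up to presentation.
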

\begin{proof}
  Let $\opt_G$ and $\opt_H$ denote the optimal size-bounded clutering
  costs in $G$ and $H$, respectively. Observe that
  $\opt_G \geq \opt_H - \sizeof{X}$, since for any clustering $\cee$,
  we have $\cost_G(\cee) \geq \cost_H(\cee) - \sizeof{X}$.
  Furthermore, $\opt_G \geq \sizeof{X}$, since the positive edges
  contained within clusters constitute a subgraph of maximum degree
  at most $K$. Taking a convex combination of these lower bounds
  yields the following lower bound on $\opt_G$:
  \begin{equation}
    \label{eq:seven}
    \opt_G \geq \frac{3}{7}(\opt_H - \sizeof{X}) + \frac{4}{7}\sizeof{X} = \frac{1}{7}(3\opt_H + \sizeof{X}).
  \end{equation}
  On the other hand, for any clustering $\cee$, we have $\cost_G(\cee) \leq \cost_H(\cee) + \sizeof{X}$, and we know
  (by Ailon--Charikar--Newman) that $\ex[\cost_H(\cee)] \leq 3\opt_H$. Hence,
  \[ \ex[\cost_G(\cee)] \leq \ex[\cost_H(\cee)] + \sizeof{X} \leq 3\opt_H + \sizeof{X} \leq 7\opt_G, \]
  where the last inequality follows from Inequality~\ref{eq:seven}.
\end{proof}

We now turn our attention to the problem of parallelizing
Algorithm~\ref{alg:callcc}. There are two steps that must be
parallelized: the step of finding the set $X$ of positive edges to
ignore, and the \ccpivot{} step. To carry out the \ccpivot{} step in
parallel, we can use the MapReduce-based algorithm of Chierchetti,
Dalvi, and Kumar~\cite{mapreduce}.  Like \ccpivot{}, this algorithm is
randomized, and it provides \emph{expected} approximation guarantees,
rather than deterministic ones.

For any $\epsilon \in (0, 1/7)$, the MapReduce-based algorithm
achieves an expected approximation ratio of $3 +
\frac{14\epsilon}{1-7\epsilon}$, terminating with high probability
after 
$$O\left(\frac{1}{\epsilon}\log(\sizeof{V(G)})\log(\Delta^+(G))\right)$$ 
rounds of its main loop, where $\Delta^+(G)$ is the maximum positive degree
of $G$. Thus, in the context of Algorithm~\ref{alg:callcc}, the parallel \ccpivot{} step terminates with high
probability after $O(\frac{1}{\epsilon}\log\sizeof{V(G)}\log K)$
rounds.

Since the runtime of Algorithm~\ref{alg:callcc} is dominated by
finding the set $X$, there is little benefit to parallelizing the call
to \ccpivot{} without also parallelizing this step. We are not aware
of any existing parallel algorithm for solving this problem exactly,
but it is easy to obtain a parallelized $2$-approximation to the
optimal solution: allow each vertex $v$ to arbitrarily and
independently choose $K$ incident positive edges, and set all its
other incident edges to negative edges. The total number of positive
edges switched this way is at most $\sum_{v \in V(G)}(d(v) - K)$;
possibly fewer edges are switched, since we ``save'' an edge whenever
two adjacent vertices both choose to delete the edge joining
them. Since any set $X$ of the desired form must delete at least
$\frac{1}{2}\sum_{v \in V(G)}(d(v)-K)$ edges, this is a
$2$-approximation to the desired set.

Since we are only finding a $2$-approximation to the optimal set $X$
instead of the optimal set, the approximation ratio of the overall
algorithm increases. Assuming that the \ccpivot{} step can be carried out with
approximation ratio $r$, we obtain an approximation ratio of $3r+2$ by modifying
the proof of Theorem~\ref{thm:7appx}. Instead of the bound $\opt_G \geq \sizeof{X}$,
we now have $\opt_G \geq \frac{1}{2}\sizeof{X}$ . This yields the bound
\begin{align*}
  \opt_G &\geq \frac{r}{3r+2}(\opt_H - \sizeof{X}) + \frac{2r+2}{3r+2}\left(\frac{1}{2}\sizeof{X}\right) \\
  &= \frac{1}{3r+2}(r\opt_H + \sizeof{X}) \\
  &\geq \frac{1}{3r+2}\ex[\cost_G(\cee)].  
\end{align*}
As a result, the approximation ratio of $3 + O(\epsilon)$ in the parallel \ccpivot{} algorithm
yields an approximation ratio of $11 + O(\epsilon)$ in the parallel version of Algorithm~\ref{alg:callcc}.
\section{Acknowledgments}
The authors thank the anonymous referees for their patient, careful reading and for their helpful comments, and the editor for his timely handling of the manuscript.
\bibliographystyle{amsplain}
\bibliography{cluster,cluster1}
\end{document}